\DeclareMathAlphabet\EuRoman{U}{eur}{m}{n}
\SetMathAlphabet\EuRoman{bold}{U}{eur}{b}{n}
\theoremstyle{plain}
\newtheorem{theorem}{Theorem}[section]
\theoremstyle{definition}
\theoremstyle{remark}
\let\reftagform@=\tagform@
\def\tagform@#1{\maketag@@@{\ignorespaces\textcolor{gray}{(#1)}\unskip\@@italiccorr}}
\renewcommand{\eqref}[1]{\textup{\reftagform@{\ref{#1}}}}
\def\[#1\]{\begin{align}#1\end{align}}
\def\*[#1\]{\begin{align*}#1\end{align*}}
\newcommand{\Reals}{\mathbb{R}}
\DeclareMathOperator*{\newlim}{\mathrm{lim}\vphantom{\mathrm{infsup}}}
\DeclareMathOperator*{\newmax}{\mathrm{max}\vphantom{\mathrm{infsup}}}
\renewcommand{\lim}{\newlim}
\renewcommand{\max}{\newmax}
\newcommand{\KL}[2]{\mathrm{KL}\left(#1 || #2\right)}
\newcommand{\expect}{\mathbb{E}}
\newcommand{\data}{\mathcal{D}}
\definecolor{mydarkblue}{rgb}{0,0.08,0.45}
\definecolor{mygreen}{HTML}{31a354}
\definecolor{myblue}{HTML}{3182bd}
\icmltitlerunning{Reward-aware Preference Optimization Unifies Model Alignment}
\begin{document}

\twocolumn[
\icmltitle{Reward-aware Preference Optimization: A Unified Mathematical Framework for Model Alignment
}



\icmlsetsymbol{equal}{*}

\begin{icmlauthorlist}
\icmlauthor{Shengyang Sun}{nv}
\icmlauthor{Yian Zhang*}{nv}
\icmlauthor{Alexander Bukharin*}{nv}
\icmlauthor{David Mosallanezhad}{nv}
\icmlauthor{Jiaqi Zeng}{nv}
\icmlauthor{Soumye Singhal}{nv}
\icmlauthor{Gerald Shen}{nv}
\icmlauthor{Adithya Renduchintala}{nv}
\icmlauthor{Tugrul Konuk}{nv}
\icmlauthor{Yi Dong}{nv}
\icmlauthor{Zhilin Wang}{nv}
\icmlauthor{Dmitry Chichkov}{nv}
\icmlauthor{Olivier Delalleau}{nv}
\icmlauthor{Oleksii Kuchaiev}{nv}
\end{icmlauthorlist}

\icmlaffiliation{nv}{NVIDIA}
\icmlcorrespondingauthor{Shengyang Sun}{shengyangs@nvidia.com}

\icmlkeywords{Machine Learning, ICML}

\vskip 0.3in
]



\printAffiliationsAndNotice{\icmlEqualContribution} 

\begin{abstract}

The rapid development of large language model (LLM) alignment algorithms has resulted in a complex and fragmented landscape, with limited clarity on the effectiveness of different methods and their inter-connections. This paper introduces Reward-Aware Preference Optimization (RPO), a mathematical framework that unifies popular preference optimization techniques in LLM alignment, including DPO, IPO, SimPO, and REINFORCE (LOO), among others. RPO provides a structured approach to disentangle and systematically study the impact of various design choices—such as the optimization objective, the number of responses per prompt, and the use of implicit versus explicit reward models—on LLM preference optimization. We additionally propose a new experimental setup that enables the clean and direct ablation of such design choices. Through an extensive series of ablation studies within the RPO framework, we gain insights into the critical factors shaping model alignment, offering practical guidance on the most effective strategies for improving LLM alignment.

\end{abstract}

\section{Introduction}

Model alignment is the training stage that makes large language models (LLMs) helpful, honest, and harmless \citep{bai2022training, ouyang2022training}. The typical alignment pipeline consists of supervised fine-tuning (SFT), which trains the model with expert demonstrations, and preference optimization, where the model learns from human or AI feedback. Following the reinforcement learning from human feedback (RLHF) algorithm that powers ChatGPT~\citep{achiam2023gpt}, researchers have presented extensive variants of preference optimization algorithms \citep{ouyang2022training, rafailov2024direct, ahmadian2024back} to advance alignment performance further.

The plethora of preference optimization algorithms has led to a highly intricate landscape, characterized by diverse design choices and their varying degrees of effectiveness. Various offline training objectives are introduced, such as DPO \citep{rafailov2024direct}, IPO \citep{azar2024general}, and SimPO \citep{meng2024simpo}., but which one performs the best? For each objective, responses can be collected either offline \citep{rafailov2024direct} or online at the training stage \citep{guo2024direct}. 
Other factors to consider include implicit versus explicit reward models (e.g. DPO vs RLHF), the number of responses to sample per prompt, and multi-iteration alignment. Which design choices matter the most and which combination works the best?

This paper presents reward-aware preference optimization (RPO), a framework that unifies various popular preference optimization techniques in LLM alignment, including DPO, IPO, SimPO, and online RLOO, among others. The RPO framework enables us to tweak different design choices and study their impact through controlled experiments. For example, tweaking the metric function in RPO allows us to compare the effects of different training objectives such as DPO, IPO and SimPO. RPO is also so flexible that we can easily tweak the number of responses per prompt as well as switch between online and offline responses to observe their impact. After developing a better understanding of each individual design factors using RPO, we can combine the most effective elements and naturally create new algorithms such as \textit{online RPO-bwd} that outperforms existing ones.

While existing work typically compares alignment algorithms based on existing training datasets and popular academic benchmarks  \citep{ivison2024unpacking, liu2024understanding, song2024importance}, this might obscure learnings due to the indirect connection bewteen the datasets and the benchmarks, e.g., does training over HH-RLHF \citep{ouyang2022training} necessarily improve MT bench \citep{zheng2023judging}? To enable a clean evaluation of alignment algorithms, we present a synthetic experiment, where the "Ground-Truth Judge" (e.g., human annotators) of preference data is available. We can evaluate the aligned model's performance over the Ground-Truth Judge's preferences. In this setting, the performance clearly indicates the ability of each alignment algorithm to optimize the underlying judge's preferences.

Based on the experiment setup, we conduct an extensive set of ablations by varying the design choices (objectives, online vs offline, number of responses, iterative alignment, reward model quality) within the RPO framework. The results from these ablations
provide valuable learnings about preference optimization algorithms. We summarize these learnings into a cookbook for model alignment at the end of this paper.

\begin{table*}[t]
\centering
\caption{RPO recovers many existing offline preference optimization algorithms. For the loss function, we only show the loss for one single preference triplet $(x, y_1, y_2)$. We further let $\delta_{r_\pi}(x, y_1, y_2) :=  \log\frac{\pi (y_1| x)}{\pi_{ref} (y_1|x)} -  \log\frac{\pi (y_2| x)}{\pi_{ref} (y_2|x)}, \delta_{r^\star}:= \eta\left(r^\star(x, y_1) - r^\star(x, y_2) \right)$. \label{tab:rpo-recover-variants}}
\resizebox{0.95\textwidth}{!}{%
\begin{tabular}{ccc}
\toprule
Algorithm                                & Loss Function & RPO's Setting \\
\midrule
DPO                                      & $- \log \sigma\left(\beta \delta_{r_\pi}(x, y_1, y_2)\right)$        & $ \mathbb{D}^{bwd};  \delta_{r^\star} = \infty $                \\
cDPO                                     & $ - \left[ c \log \sigma\left(\beta \delta_{r_\pi}(x, y_1, y_2)\right) + (1-c)  \log \sigma\left(\beta \delta_{r_\pi}(x, y_2, y_1)\right) \right]$       & $\mathbb{D}^{bwd}; \delta_{r^\star} = \sigma^{-1}(c)      $                         \\
IPO                                   & $ \left(\delta_{r_\pi}(x, y_1, y_2) - \frac{1}{2\beta}\right)^2$          & $\mathbb{D}^{sq}; \delta_{r^\star} = \frac{1}{2}      $                         \\
Distill DPO                              &     $ \left(\beta \delta_{r_\pi}(x, y_1, y_2) -\delta_{r^\star} \right)^2  $       &            $\mathbb{D}^{sq}$              \\
SimPO                                    &       $ -\log \sigma\left( \frac{\beta}{|y_1|}  \log \pi (y_1| x) -\frac{\beta}{|y_2|}  \log \pi (y_2| x) - \gamma \right)$        &                if $\gamma=0$: $ \mathbb{D}^{bwd}, \log \pi_{ref}(y|x) \propto |y|$          \\
BRAIn &         \text{Equation~22} in \citet{pandey2024brain}     &        $ \mathbb{D}^{bwd}$                  \\
DNO & Algorithm~1 in \citet{rosset2024direct} & $r^\star(x, y) = \expect_{y'\sim \pi} \mathcal{P}(y \succ y' | x)$ \\
SteerLM 2.0      &  \text{Equation~11} in \citet{wang2024helpsteer2}  &  $r^\star(x,y)=\log\frac{P(a|y,x)P(y|x)}{Q'(y|x,a)}$ \\
\bottomrule
\end{tabular}
}
\end{table*}

\section{Background}
\paragraph{Notation.} 
We denote a prompt by $x $, a response by $y$, the policy model $\pi_{\theta}$ and the reference model $\pi_{ref}$, which is usually a supervised-fine-tuned (SFT) model and the initialization of the preference optimization stage. We denote a reward model (RM) $r$ which evaluates the quality of a response $y$ given the prompt $x$. In preference optimization, the dataset $\data = \{(x_i, y_i^{1}, y_i^2)\}_{i=1}^n$ comprises the (prompt, chosen response, rejected response) triplet unless otherwise specified. Here we assume $y_i^1$ is the chosen response and $y_i^2$ is the rejected response. It can also be extended to the setting with $K$ responses per prompt $\data = \{(x_i, y_i^{1}, ..., y_i^K)\}_{i=1}^n$.

\paragraph{Preference Optimization.} Given an SFT-ed checkpoint $\pi_{ref}$, preference optimization further improves the model by informing the model of the type of responses that are preferred by humans. Typical preference optimization algorithms include Reinforcement Learning from Human Feedback (RLHF) \citep{ouyang2022training, bai2022training} and Direct Preference Optimization \citep{rafailov2024direct}.

\paragraph{Reinforcement Learning from Human Feedback (RLHF).} RLHF first trains a reward model $r_{\phi}$ to approximate human preferences over responses. Given a preference dataset, one can train $r_{\phi}$ using the Bradley-Terry model:
\begin{align}
    \max_{\phi} \expect_{(x, y^1, y^2)} \log \frac{\exp(r_{\phi}(x, y^{1}))}{\exp(r_{\phi}(x, y^{1})) + \exp(r_{\phi}(x, y^{2}))}. \notag 
\end{align}
The next step is to optimize the policy $\pi_{\theta}$ to maximize $r_{\phi}$'s ratings of its generations subject to a KL regularization term:
\begin{align}\label{eq:rlhf-objective}
    \max_{\pi_{\theta}} \expect_{x, y\sim \pi_{\theta}} r_{\phi}(x, y) - \beta \KL{\pi_{\theta}(y|x)}{\pi_{ref}(y|x)},
\end{align}
where $\beta \ge 0$. Popular RL algorithms include Proximal Policy Optimization (PPO) \citep{schulman2017proximal}, REINFORCE  Leave-One-Out (RLOO) \citep{ahmadian2024back}, and GRPO \citep{shao2402deepseekmath}.

\paragraph{Direct Preference Optimization.} While we can use RL to maximize Eq~\ref{eq:rlhf-objective},  \citet{rafailov2024direct} shows that the optimal policy in Eq~\ref{eq:rlhf-objective} has a close-form expression:
\begin{align}
   \pi_r^\star(y|x) \propto  \pi_{ref}(y|x) \exp(r(x, y) / \beta).
\end{align}
This means a reward model can be represented by its corresponding optimal policy network: 
\begin{align}
    r_{\pi}(x, y) = \beta \log \frac{\pi (y| x)}{\pi_{ref} (y|x)}  + \beta \log Z(x). 
\end{align}
where $\log Z(x)$ is the partition function independent of the response $y$. Optimizing this implicit reward model using the Bradley-Terry model results in the differentiable objective:
\begin{align}
    \max_{\theta} \frac{1}{n}\sum_{i=1}^n \log \sigma\left(\beta  \log\frac{\pi (y_i^1| x_i)}{\pi_{ref} (y_i^1|x_i)} - \beta  \log\frac{\pi (y_i^2| x_i)}{\pi_{ref} (y_i^2|x_i)}\right). \notag 
\end{align}


\section{Reward-aware Preference Optimization Unpacks Preference Optimization Factors}\label{sec:rpo} 
This section shows that Reward-aware Preference Optimization (RPO) is a framework that unifies various alignment algorithms scuh as DPO and RLOO. We can use RPO to disentangle the impact of different design choices in alignment such as implicit vs explicit RM, online vs offline algorithms, paired vs multiple responses, etc. 

\subsection{Reward-aware Preference Optimization}


Reward-aware Preference Optimization (RPO) is first proposed in \citet{adler2024nemotron} as an upgraded version of DPO. Different from in DPO, which is based on qualitative reward signals ("which response is better"), RPO trains the implicit reward model using quantitative signals ("how much better is the preferred response") from a target explicit reward model $r^\star$. Instead of maximizing the reward difference between the preferred and rejected response, RPO minimizes the distance between the predictions of the implicit RM $r_{\pi_{\theta}}$ and those of a target RM $r^\star$ under a specific distance metric $\mathbb{D}: \mathbb{R} \times \mathbb{R} \to \mathbb{R}^{*}$:
\begin{align}\label{eq:rpo}
     & \mathbb{D} \left[ r_{\pi_{\theta}}(x_i, y_i^1) - r_{\pi_\theta}(x_i, y_i^2)  \| \eta  r^\star(x_i, y_i^1) - \eta  r^\star(x_i, y_i^2) \right] \notag \\
     &=: \mathcal{L}_{rpo}^{\mathbb{D}}( \pi_{\theta}, (x, y^1, y^2) |r^\star, \pi_{ref}, \beta, \eta),
\end{align}
where $(x, y^1, y^2)$ is a preference pair. $\eta \in \Reals^{*}$ and $\beta \in \Reals^{*}$ are hyperparameters controlling the reward scale and regularization. Similarly in RLHF, we assume the reward model $r^{*}$ has been trained to reflect human preference and optimizing the RPO objective improves the policy $\pi_{\theta}$. 

While \citet{adler2024nemotron} assumes a backward KL \textbf{distance metric}, a \textbf{paired data scheme}, and an \textbf{offline training setup}, RPO in fact allows much more flexibility in terms of these choices. We will discuss these design choices in the following paragraphs as well as how certain choice combinations recovers some existing alignment algorithms.

\subsection{Distance metric}
We consider two distance metric $\mathbb{D}$ in Equation~\ref{eq:rpo}.
\begin{itemize}
    \item \textbf{Squared Distance.} $\mathbb{D}^{sq}\left[a\|b\right] := \frac{1}{2}(a - b)^2$.
    \item \textbf{Backward Bernoulli KL divergence.} For $a \in \Reals$, we define a Bernoulli distribution $p_a$ as $p_a(x=1) = \frac{1}{1+e^{-a}}$. We then define the metric as the backward KL divergence $\mathbb{D}^{bwd}\left[a\|b\right]:=\mathrm{KL}\left[p_b \| p_a\right] $. 
\end{itemize}

\paragraph{RPO recovers preference optimization algorithms.} With proper combinations of 
$\mathbb{D}$, $r^\star$, $\pi_{ref}$, $\beta$, and $\eta$, RPO recovers many algorithms as shown in Table~\ref{tab:rpo-recover-variants}, including DPO \citep{rafailov2024direct}\footnote{See Appendix~\ref{app:rpo-eqs-dpo} for proof.}, cDPO \citep{mitchell2023note},  IPO \citep{azar2024general}, Distill DPO \citep{fisch2024robust}, BRAINn \citep{pandey2024brain}, DNO \citep{rosset2024direct}, SimPO \citep{meng2024simpo}  \footnote{RPO's corresponding $\pi_{ref}$ to SimPO is not a mathematically valid probability distribution.}, and SteerLM 2.0 \citep{wang2024helpsteer2}.

\subsection{Number of Responses Per Prompt}
Most preference optimization algorithms like DPO take a pair of responses. RPO, however, can be naturally extended to the multi-response scenario. Specifically, let $K$ be the number of responses per prompt and $y^{1:K}$ be $K$ responses for the prompt $x$. The two-response RPO aims to align the "chosen-rejected margin" between the implicit rewards and the explicit rewards using a distance metric $\mathbb{D}: \Reals \times \Reals \to \Reals$. For multiple responses, we use a corresponding distance metric $\mathbb{D}: \Reals^K \times \Reals^K \to \Reals$ over $K$-dimensional rewards and define the multi-response RPO objective as
\begin{align}
    \mathbb{D}\left[ r_{\pi_{\theta}}(x, y^1),..., r_{\pi_{\theta}}(x, y^K) \|\eta r^\star(x, y^1),..., \eta r^\star(x, y^K) \right]. \notag 
\end{align}
While the implicit RM $r_{\pi_{\theta}}$ contains the intractable log partition function $\log Z(x)$, for specific distance $\mathbb{D}$, $\log Z(x)$ will cancel out, similar to the two-response case. Then we have the multi-response RPO objective:
\begin{align}\label{eq:multi-resp-rpo}
    &\mathcal{L}_{rpo}^{\mathbb{D}}( \pi_{\theta}, (x, y^{1:K}) |\mathbb{D}, r^\star, \pi_{ref}, \beta, \eta)  \notag \\
    & \quad =  \mathbb{D}\left[ \begin{bmatrix}
           \beta \log\frac{\pi (y^1| x)}{\pi_{ref} (y^1|x)} \\
           \vdots \\
           \beta \log\frac{\pi (y^K| x)}{\pi_{ref} (y^K|x)}
         \end{bmatrix} \|\begin{bmatrix}
          \eta  r^\star(x, y^1)\\
           \vdots \\
           \eta r^\star(x, y^K)
         \end{bmatrix} \right].
\end{align}
Both $\mathbb{D}^{sq}$ and $\mathbb{D}^{bwd}$ can be extended to the multi-response case. The detailed derivations can be found in Appendix \ref{app:multi-response distance}. Particularly, the squared distance can be extended to the \textbf{squared distance with Leave-One-Out} ($\mathbb{D}^{sqloo}$):
\begin{align}\label{eq:d-sqloo}
    &\mathbb{D}^{sqloo}(a_{1:K}, b_{1:K}) = \frac{1}{2} \sum_{k=1}^K (\hat{a}_k - \hat{b}_k)^2, \\
    &\hat{a}_k = a_k - \frac{1}{K-1} \sum_{j \neq k} a_j; \hat{b}_k= b_k - \frac{1}{K-1} \sum_{j \neq k}b_j. \notag
\end{align}
The backward Bernoulli KL divergence can be extended to the \textbf{backward Categorical KL divergence}:
\begin{align}\label{eq:d-bwd-kl}
      & \mathbb{D}^{bwd}(a_{1:K}, b_{1:K})= \sum_{i=1}^K q^b_i \left(\log q^b_i - \log q^{a}_i\right), \\
      & q^b_i = \frac{\exp(b_i)}{\sum_{j=1}^K \exp(b_j)};  q^a_i = \frac{\exp(a_i)}{\sum_{j=1}^K \exp(a_j)}. \notag 
\end{align}
In both cases, the log partition function cancels out and the multi-response RPO objective becomes computable.

\subsection{Online Responses}\label{sec:online-rpo}
So far in our discussion, we assume offline RPO, where the preference dataset is collected beforehand. RPO can also be easily extended to an online algorithm. Specifically, for each training step, we sample K responses $y^{1:K}$ from the model given prompt $x$ and optimize the loss function $\mathcal{L}_{rpo}^{\mathbb{D}}( \pi_{\theta}, y^{1:K}, x |\mathbb{D}, r^\star, \pi_{ref}, \beta, \eta).$ Since it is fully differentiable, it can be directly optimized with gradient descent:
\begin{align}
    \theta \leftarrow \text{optimizer}(\theta, \nabla_{\theta} \mathcal{L}_{rpo}^{\mathbb{D}}( \pi_{\theta}, (x, y^{1:K}) |\mathbb{D}, r^\star, \pi_{ref}, \beta, \eta)). \notag 
\end{align}

\paragraph{The RPO gradient resembles a REINFORCE estimator.} 
With a distance metric $\mathbb{D}$ such that the log partition function cancels out, the RPO objective's gradient  $ \nabla_{\theta}  \mathcal{L}_{rpo}^{\mathbb{D}}$ is:
\begin{align}
    & \sum_{k=1}^K \nabla_{r^{\pi_{\theta}}(x, y^k)} \mathbb{D}\left[  r^{\pi_{\theta}}(x, y^{1:K}) \| \eta r^{\star}(x, y^{1:K}) \right] \nabla_{\theta} r^{\pi_{\theta}}(x, y^k) \notag \\
    &= \beta \sum_{k=1}^K S_k \nabla_{\theta} \log \pi_{\theta}(y^k|x). \\
    & S_k := \nabla_{r^{\pi_{\theta}}(x, y^k)} \mathbb{D}\left[ r^{\pi_{\theta}}(x, y^{1:K}) \| \eta r^{\star}(x, y^{1:K}) \right]. \notag 
\end{align}
Interestingly, the gradient resembles a REINFORCE gradient estimator  \citep{williams1992simple}, where $S_k$ is the "scale" of each score function  $\log \pi_{\theta}(y^k|x)$. 

\paragraph{RPO recovers RLOO \citep{ahmadian2024back}.} When choosing the squared distance with Leave-One-Out (rpo-sqloo), it is equivalent to the RLOO algorithm, as shown below. Based on formula of $\mathbb{D}^{sqloo}$ in Eq~\ref{eq:d-sqloo}, the gradient
    \begin{align}
        \nabla_{a_k} \mathbb{D}^{sqloo}(a_{1:K}, b_{1:K}) 
        = \frac{K}{K-1} (\hat{a}_k - \hat{b}_k). \notag 
    \end{align}
We define the reinforce gradient as the following equation, which equals the score function's scale when adopting the vanilla REINFORCE gradient estimator \citep{williams1992simple}.
\begin{align}
r_{reinforce}^{\star, \pi_{\theta}}(x, y^k; \beta, \eta):= \beta \log \frac{\pi_{\theta}(y^k|x)}{\pi_{ref}(y^k|x)} - \eta r^{\star}(x, y^k) . \notag 
\end{align}
Then we compute the scale $S_k$,
\begin{align}
   & \frac{K-1}{K} S_k = \left(r^{\pi_{\theta}}(x, y^k) - \frac{1}{K-1}\sum_{j\neq k} r^{\pi_{\theta}}(x, y^j)\right)  \notag  \\
     & \quad \quad \quad  -\eta \left(r^{\star}(x, y^k) - \frac{1}{K-1}\sum_{j\neq k} r^{\star}(x, y^j)\right)  \notag \\
    & = r_{reinforce}^{\star, \pi_{\theta}}(x, y^k; \beta, \eta) - \frac{1}{K-1}\sum_{j\neq k} r_{reinforce}^{\star, \pi_{\theta}}(x, y^j; \beta, \eta). \notag 
\end{align}
The above shows RPO recovers the REINFORCE Leave-One-Out when $\eta=1$ \citep{ahmadian2024back}. 

\paragraph{RPO with backward Categorical KL (rpo-bwd).} When adopting the backward KL $\mathbb{D}^{bwd}$ in Eq~\ref{eq:d-bwd-kl}, we can compute the RPO objective's gradient as follows.
\begin{align}
    S_k   &= q_k^{ r^{\pi_{\theta}}(x, y^{1:K})} - q_k^{\eta r^\star(x, y^{1:K})}.
\end{align}
The first term are the softmax probabilities with $\{\beta \log \frac{\pi_{\theta}(y^k|x)}{\pi_{ref}(y^k|x)} \}_{k=1}^K$ as logits; the second term are the softmax probabilities with $\{\eta r^\star(x, y^k)\}_{k=1}^K$ as logits. Thus online RPO-bwd is equivalent to replacing the "scale" in the REINFORCE estimator as the difference between the softmax probabilities of ground-truth rewards and predicted rewards. The derivation can be found in Appendix \ref{app:gradient-rpo-bwd}. Computing the softmax has a normalization effect over rewards, similar to the variance normalization in GRPO \citep{shao2402deepseekmath}.

\section{The Experiment Setup}\label{sec:exp-design}
The DAG below shows key elements in model alignment.
\begin{align}
    \textit{Ideal Goal} \rightarrow \textit{Formalized Goal} \overset{\text{Alignment}}{\longrightarrow} \textit{Models}  \rightarrow  \textit{Evals} \notag 
\end{align}
Model alignment cares about the \textit{Ideal Goal}, that is to train models that are helpful, honest, and harmless \citep{ouyang2022training, bai2022training}. While the \textit{Ideal Goal} is ambiguous, a \textit{Formalized Goal} is used to approximate it, such as human annotator preferences. Then we run model alignment algorithms like PPO and DPO to align models towards the \textit{Formalized Goal}. Finally, we evaluate how well does the model perform regarding the \textit{Ideal Goal} using a wide range of benchmarks such as MMLU \citep{hendrycks2020measuring}, HumanEval~\citep{chen2021evaluating}, and Lmsys Chatbot Arena \citep{chiangchatbot}. While the alignment process involves many factors, such as human annotators, alignment algorithms, and benchmarks, this work focuses on the impact of alignment algorithms, i.e., from \textit{Formalized Goal} to \textit{Models}.

Based on the discussions above, we propose the following experiment setup to ablate alignment algorithms in a clean and direct way. The essence of the design is to compare which algorithm optimizes the \textit{Formalized Goal} the best.


\subsection{Experiment Setup}


\textbf{Ground-Truth Judge.} To contextualize the \textit{Formalized Goal}, we first choose the "Ground-Truth" judge. The whole purpose of alignment algorithms is to optimize the "Ground-Truth" judge's preferences over its generations. Since human labelers are costly, we use a reward model as the synthetic Ground-Truth judge. 
Specifically, we choose {Nemotron-4-340B-RM} \citep{wang2024helpsteer2}, one of the leading reward models on the \textit{Reward Bench} leaderboard \citep{lambert2024rewardbench}. In fact, any reasonable RM can be used in our setup since it is assumed to be the Ground-Truth.

\textbf{Model Initialization.} Since this experiment focuses on the preference optimization stage, we initialize the model from a Supervised Fine-Tuning (SFT) checkpoint, which is created by training the \textit{llama3-8b/70b-base} model \citep{dubey2024llama} on 100k responses generated by {Nemotron-4-340B-Instruct} \citep{adler2024nemotron}, whose prompts come from the \textit{lmsys-1M} dataset \citep{zheng2023lmsyschat1m}.

\textbf{Preference Training Datasets.} We use 120k \textit{lmsys-1M} prompts to build the preference dataset, which are kept disjoint from the ones used in building the SFT dataset. Given each prompt, multiple responses are sampled from the SFT model and annotated using the Ground-Truth RM (i.e., Nemotron-4-340B-RM).  We then select the highest-reward response as the chosen response and a random response as the rejected response to create a preference triplet.

\textbf{Evaluation.} As we focus on optimizing the Ground-Truth Judge's preference, we naturally choose its predicted reward as the evaluation metric. We consider three sets of evaluation prompts: 1024 \textit{lmsys (valid)} prompts for hyperparameter tuning and checkpoint selection, 1024 \textit{lmsys (test)} prompts for in-distribution evaluation, and 805 \textit{alpacaeval} prompts \citep{dubois2024length} for out-of-distribution evaluation. For each prompt set, we generate responses from the trained model, annotate rewards using the RM, and then calculate the final metrics: \textit{average reward} and \textit{average win-rate over the SFT checkpoint} as well as their 95\% confidence interval.

\begin{table*}[t]
\centering
\caption{The average reward and win-rate of model alignment algorithms.. We compute the metrics over \textit{lmsys (test)} prompts and \textit{alpacaeval} prompts. We train models starting with \textit{llama3-8b-sft} and \textit{llama3-70b-sft}; with different objectives (\textit{dpo, simpo, kto, rpo-bwd, rpo-sqloo}); with different numbers of responses per prompt (K=2, K=4); with {\color{myblue}{offline}} or {\color{mygreen}{online}} responses; with the Ground-Truth RM or a Learnt RM from the preference dataset. For 8b, we report 95\% confidence intervals over three runs with the best hyper-parameter. For 70b, we run once for the best hyper-parameter due to compute limitations. ${\color{red}{^\dagger}}$\textit{Online rpo-sqloo} is equivalent to \textit{REINFORCE Leave-One-Out}. \label{tab:offline-online}}
\resizebox{0.98\textwidth}{!}{%
\begin{tabular}{ccccccccc}
\toprule
             &   &  & & & \multicolumn{2}{c}{AvgReward}                              & \multicolumn{2}{c}{Win-Rate ($\%$)}                                  \\
             Base &    Loss & K & Responses & RM  & lmsys (test)      & alpacaeval           & lmsys (test)        & alpacaeval         \\
\midrule
\multirow{10}{*}{8b} & sft &  & &   & 5.284 & 5.383 & 50 & 50 \\
 & dpo & 2  & {\color{myblue}{Offline}}  & GT & $5.503 \pm 0.016$ & $5.600 \pm 0.006$ & $70.5 \pm 1.2$  & $71.6 \pm 2.0$ \\
& simpo & 2  & {\color{myblue}{Offline}} & GT  & $5.533 \pm 0.006$ & $5.646 \pm 0.016$  & $71.6 \pm 1.1$ & $72.0 \pm 1.6$ \\
& kto & 2  & {\color{myblue}{Offline}} & GT  & $5.453 \pm 0.017$ & $5.509 \pm 0.012$ & $65.8 \pm 3.0$  & $62.8 \pm 1.7$ \\
& rpo-bwd & 2  & {\color{myblue}{Offline}}  &  GT      & $5.496\pm 0.014$ & $5.623 \pm 0.016$ & $69.1\pm 1.5$ & $72.8\pm 0.8$ \\
 & rpo-bwd & 4         & {\color{myblue}{Offline}} & GT    & $5.525\pm 0.007$ & $5.618\pm 0.002$  & $70.2\pm 1.5$ & $72.3\pm 0.3$ \\
 & rpo-sqloo & 2  & {\color{myblue}{Offline}} & GT  & $5.448 \pm 0.011$ & $5.556 \pm 0.007$ & $63.3\pm 1.3$  & $62.8 \pm 0.2$  \\
 & rpo-sqloo & 4  & {\color{myblue}{Offline}} & GT  & $5.445 \pm 0.008$ & $5.573 \pm 0.013$ &  $60.1 \pm 0.3$   & $63.8 \pm 1.1$ \\
& rpo-bwd & 4  & {\color{mygreen}{Online}} & GT & $\mathbf{5.66 \pm 0.03}$ & $\mathbf{5.685 \pm 0.035}$  & $\mathbf{78.6 \pm 0.1}$ & $\mathbf{77.9 \pm 1.0}$  \\
 & rpo-sqloo${\color{red}{^\dagger}}$ & 4  & {\color{mygreen}{Online}} & GT & $5.617 \pm 0.008$ & $5.672 \pm 0.047$  &$77.9 \pm 0.5$ & $74.8 \pm 1.1$  \\
 & rpo-bwd & 4 & {\color{mygreen}{Online}}  & Learnt & $5.355 \pm 0.014$ & $5.472 \pm 0.013$ & $57.3 \pm 1.6$ & $58.5 \pm 1.9$ \\
 & rpo-sqloo${\color{red}{^\dagger}}$ & 4  & {\color{mygreen}{Online}}  & Learnt  & $5.335 \pm 0.002$ & $5.442 \pm 0.013$ & $57.3 \pm 1.2$ & $56.7 \pm 0.9$ \\
\midrule
\multirow{10}{*}{70b} & sft &  & &   & 5.469 & 5.671 & 50 & 50 \\
 & dpo & 2  & {\color{myblue}{Offline}}  & GT & 5.805 & 5.926  & 81.7 & 82.2 \\
& simpo & 2  & {\color{myblue}{Offline}} & GT  & 5.794 & 5.907 & 79.6 &  77.0 \\
& kto & 2  & {\color{myblue}{Offline}} & GT  & 5.657 & 5.799  &
68.9 &
68.2\\ 		
& rpo-bwd & 2 & {\color{myblue}{Offline}} & GT  & 5.774 & 5.910 & 78.7 & 80.9 \\
 & rpo-bwd & 4  & {\color{myblue}{Offline}} & GT  & 5.788 & 5.921 & 77.1 & 81.0   \\
 & rpo-sqloo & 2 & {\color{myblue}{Offline}}  & GT  & 5.710 & 5.863 & 72.1 & 73.7  \\
 & rpo-sqloo & 4 & {\color{myblue}{Offline}} & GT  & 5.740 &  5.878   & 76.1 & 73.4   \\
 & rpo-bwd & 4 & {\color{mygreen}{Online}}  & GT  & $\mathbf{5.916}$ & $\mathbf{5.992}$ &  $\mathbf{85.7}$  & $\mathbf{85.5}$  \\
 & rpo-sqloo${\color{red}{^\dagger}}$ & 4  & {\color{mygreen}{Online}}  & GT & 5.796 & 5.924 &  78.5  & 76.1  \\
 & rpo-bwd & 4 & {\color{mygreen}{Online}}  & Learnt  & 5.503 & 5.695 & 52.9 & 52.2  \\
 & rpo-sqloo${\color{red}{^\dagger}}$ & 4  & {\color{mygreen}{Online}}  & Learnt  & 5.484 & 5.685  & 52.6 & 53.8 \\
\bottomrule
\end{tabular}
}
\end{table*}

\subsection{The impact of prompt distributions}\label{subsec:prompt-impacts}

\begin{figure}[h]
    \centering
    \includegraphics[width=\linewidth]{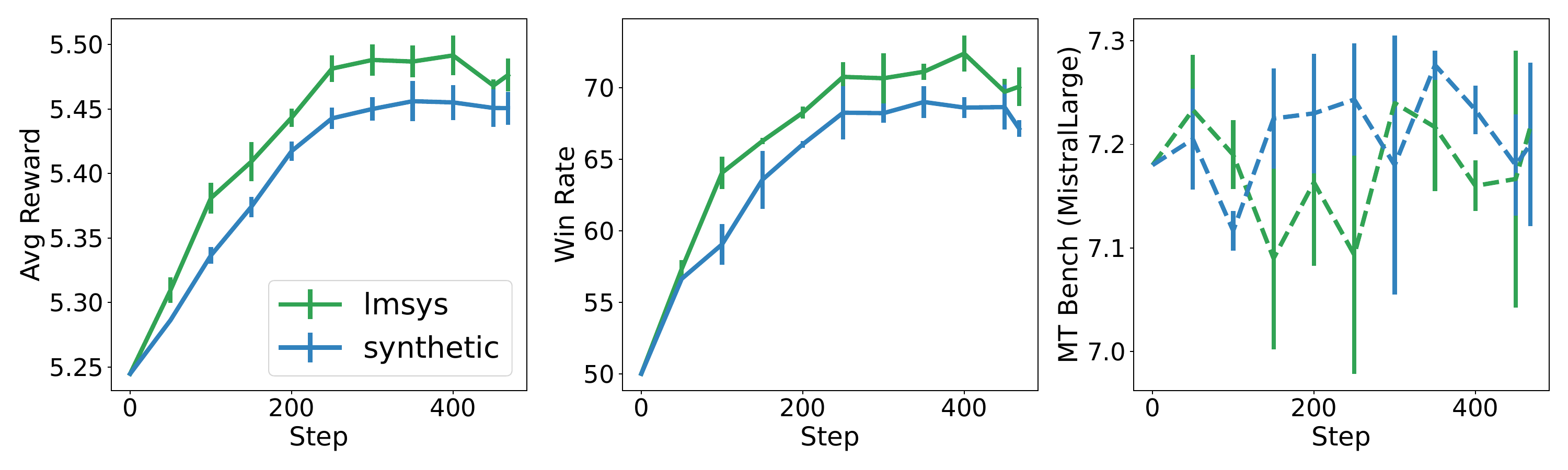}
    \caption{The average reward (\textit{left}) and win-rate (\textit{mid}) over \textit{lmsys (valid)} prompts along training. The \textit{right} figure shows the MT bench (judged by Mistral Large 2). Error bars represent 95\% confidence intervals over 3 independent runs. We compare two training datasets, which are generated by the llama3-8b-sft model using \textit{{\color{mygreen}lmsys}} and \textit{{\color{myblue} synthetic}} prompts, respectively. We observe training on in-distribution  \textit{lmsys} prompts achieves higher rewards than training on out-of-distribution  \textit{synthetic} prompts. However, the MT-Bench metric has a large variance, hardly showing any learnings.}
    \label{fig:prompts}
\end{figure}


We conduct an initial experiment to show that the proposed experiment framework is a better tested to study alignment algorithms. Specifically, we aim to study the impact of training on \textit{in-distribution} prompts and \textit{out-of-distribution} prompts. For the \textit{in-distribution} prompts, we generate a preference training dataset using the \textit{llama3-8b-sft} model over \textit{lmsys} prompts. For the \textit{out-of-distribution} prompts, we generate another preference dataset using the same model over \textit{synthetic} prompts, which were generated with the similar pipeline as UltraChat \citep{ding2023enhancing} and Daring-Anteater \citep{wang2024helpsteer2}. In Figure~\ref{fig:prompts}, we plot how the metrics evolve along with training steps. Specifically, we compare \textit{validation avg reward}, \textit{validation win-rate}, and the \textit{MT Bench} benchmark \citep{zheng2023judging}, which is a popular benchmark for evaluating aligned models. 

Based on Figure~\ref{fig:prompts}, we observe that both the proposed \textit{avg reward} and \textit{win-rate} have a clear increase as training progresses, while the MT bench metric fluctuates a lot. The contrast demonstrates that \textbf{our proposed experiment setup can better investigate the effect of alignment algorithms, while existing benchmarks might obscure the learnings}. We also observe that training on in-distribution \textit{lmsys} prompts achieves higher average reward and win-rate than training on out-of-distribution  \textit{synthetic} prompts, which matches our intuition. Besides the validation performances in Figure~\ref{fig:prompts}, we further report the performances over the \textit{lmsys(test)} and \textit{alpacaeval} prompts in Table~\ref{tab:in-out-dist-prompts} in the appendix.


\section{Related Works}

\textbf{Preference optimization algorithms.} The pioneering RLHF works \citep{ouyang2022training, bai2022training} adopted Proximal Policy Optimization (PPO) \citep{schulman2017proximal} to optimize the RLHF objective in Eq~\ref{eq:rlhf-objective}. To improve memory efficiency and optimization stability, \citet{ahmadian2024back} and \citet{shao2402deepseekmath} forgo the critic in PPO with the REINFORCE Leave-One-Out (RLOO) and the Group Relative Policy Optimization (GRPO), respectively. RLOO and GRPO have separately powered the alignment of LLama3-Nemotron-70b-Instruct~\citep{wang2024helpsteer2preference} and DeepSeek-R1~\citep{guo2025deepseek}. In contrast to online RL methods, \citet{rafailov2024direct} proposed DPO which directly optimizes the policy network using an offline dataset according to its connection to the implicit RM. Researchers then proposed various objectives beyond DPO, including noisy DPO \citep{mitchell2023note}, IPO \citep{azar2024general}, SimPO \citep{meng2024simpo}, KTO \citep{ethayarajh2024kto}, distill DPO \citep{fisch2024robust}, DNO \citep{rosset2024direct}, BRAIn~\citep{pandey2024brain}, APO~\citep{d2024anchored}, WPO~\citep{zhou2024wpo}, and SteerLM 2.0~\citep{wang2024helpsteer2}. SPIN \citep{chenself} adopted the DPO objective in supervised fine-tuning and showed consistent improvement with iterative alignment. \citet{adler2024nemotron} introduced RPO in their Nemotron-4-340b-Instruct training focusing on backward KL, paired responses, and offline training. In this work, we extend RPO to different design choices and demonstrate it as a roadmap to connect various online and offline preference optimization algorithms. 

\textbf{Studies of model alignment algorithms.} Given the complicated landscape of model alignment algorithms, researchers conducted various studies to investigate their performances. Most approaches \citep{ivison2024unpacking, liu2024understanding, song2024importance, wang2024helpsteer2} relied on training over existing datasets (e.g., HH-RLHF~\citep{ouyang2022training}) and evaluating over existing benchmarks (e.g., MT bench~\citep{zheng2023judging}). However, the indirect relationship between the dataset and the benchmark might generate misleading conclusions, like we showed in Figure~\ref{fig:prompts}. \citet{tang2024understanding} conducted carefully designed ablations to investigate the performance gap between online and offline algorithms. \citet{lin2024limited} showed that implicit RMs usually generalizes worse than explicit RMs over out-of-distribution datasets. Similarly to RPO, UNA~\citep{wang2024unifying} proposed to approximate the explicit RM with the implicit RM but failed to disentangle different design factors. \citet{wang2024comprehensive} is a survey of various alignment algorithms.

\section{Ablation Results}
Section~\ref{sec:rpo} shows that RPO unifies many preference optimizing algorithms with different design choices. Section~\ref{sec:exp-design} proposes the experimental setup to ablate the performance of alignment algorithms. In this section, we vary these design choices, including objectives, number of responses, online or offline responses, and reward model qualities. We study how do these design choices affect model alignment qualities according to the proposed experimental framework.

Specifically, we train models starting with both the \textit{llama3-8b-sft} and \textit{llama3-70b-sft} models to optimize the Ground-Truth Judge (i.e., Nemotron-4-340B-RM)'s preferences. Offline methods, including DPO, SimPO, KTO, RPO-bwd, and RPO-sqloo, are trained using the preference dataset annotated by the Ground-Truth Judge. Online methods, including online RPO-sqloo (i.e., RLOO) and online RPO-bwd, optimizes the Ground-Truth Judge's preferences. Understanding the best approach to train a reward model from preference data is out of scope of this paper. We evaluate model performances according to the average rewards and win-rates over \textit{lmsys (test)} prompts (in-distribution) and \textit{alpacaeval} prompts (out-of-distribution). The full results are shown in Table~\ref{tab:offline-online} and we detail our learnings in the below.

\textbf{Objectives.} Focusing on the offline case and setting $K=2$, we compare different objectives' performances. We observe DPO, RPO-bwd, and SimPO perform better for both the 8b and the 70b model. In contrast, KTO and RPO-sqloo have consistently worse rewards and win-rates.

\textbf{Number of Responses}\label{sec:multi-rep-rep}
RPO enables to use >2 responses (K) for training. For both distance metrics (sqloo and bwd), we increase K from 2 to 4 and keep everything else the same (model initialization, prompts, evaluation) to repeat the experiments. Comparing numbers for both 8b and 70b in Table \ref{tab:offline-online}, we observe no significant improvements brought by the increase in K. This suggests that the number of responses does not affect alignment performances much in our setup.

\textbf{Comparing Online Algorithms}
 We illustrated the equivalence between online RPO-sqloo and RLOO in Section~\ref{sec:online-rpo} and proposed online RPO-bwd. In this section, we compare the performance of {\color{mygreen} online RPO-bwd} and {\color{myblue} online RPO-sqloo} (i.e. RLOO). As shown in Table~\ref{tab:offline-online}, with the same Ground-Truth RMs, {\color{mygreen} online RPO-bwd} improves significantly over {\color{myblue} online RPO-sqloo} for both 8b and 70b models. For example, the \textit{lmsys(test)} avgReward increases from 5.796 to 5.916 and its win-rate increases from $78.5\%$ to $85.7\%$ in the 70b setting. We also plot the learning curves of the \textit{lmsys(valid)} avgReward and $\KL{\pi_{\theta}}{\pi_{ref}}$ in Figure~\ref{fig:online-rpo-bwd-vs-sqloo}. As shown in the figure, {\color{mygreen} online RPO-bwd} has a slower KL divergence increase, together with faster reward increases. {\color{mygreen} Online RPO-bwd}'s training is also more stable given that {\color{myblue} online RPO-sqloo} crashes in the middle. Since the RLHF objective aims to improve the reward while regularizing the KL, this highlights that \textit{online RPO-bwd is a better optimizer for the RLHF  objective (Eq~\ref{eq:rlhf-objective})  than RLOO}.

\begin{figure}[h]
    \centering
    \includegraphics[width=\linewidth]{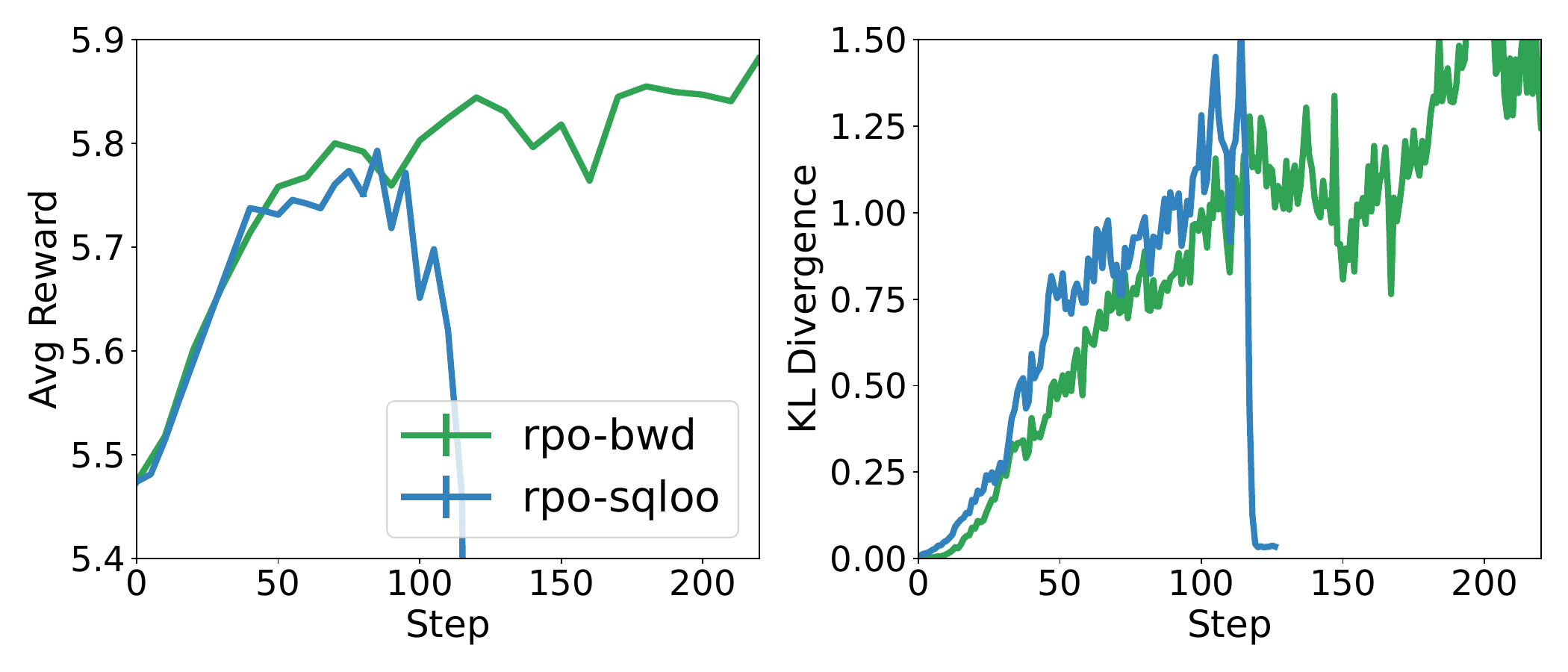}
    \caption{{\color{mygreen} Online RPO-bwd} vs {\color{myblue} online RPO-sqloo} (RLOO). We plot average rewards on \textit{lmsys(valid)} (\textit{left}) and the KL divergence with the reference policy (\textit{right}). The valid reward increases faster and the KL divergence increases slower for {\color{mygreen} RPO-bwd}. This indicates that {\color{mygreen} online RPO-bwd} can better optimize the RLHF objective (Eq~\ref{eq:rlhf-objective}) than {\color{myblue} RLOO}. In addition, {\color{myblue} RLOO}'s training exploded in the middle; while {\color{mygreen} RPO-bwd}'s training kept stable in all our runs.}
    \label{fig:online-rpo-bwd-vs-sqloo}
\end{figure}

\textbf{Online vs Offline Training}
For both \textit{rpo-bwd (K=4)} and \textit{rpo-sqloo (K=4)} and for both \textit{8b} and \textit{70b} models, we run online and offline training and show the results in Table~\ref{tab:offline-online}. Specifically, offline training adopts the preference datasets annotated by the Ground-Truth Judge; online training optimizes the policy to maximize the Ground-Truth Judge's preferences. Across all settings (rpo-bwd / rpo-sqloo; 8b / 70b), online  training outperforms offline training. For example, The 70b online rpo-bwd model has an out-of-distribution (alpacaeval) win rate of 85.5\%, while the offline counterpart gets 81.0\%. This shows an advantage of online training. It is worthy to note that, we assumed access to the Ground-Truth judge in online training. In practice, if only the human annotated preference datasets are available, online training is not necessary better than offline training since it requires to train a new reward model. The quality of the learnt reward model is critical to online method's success, as we show below in Figure~\ref{fig:training-with-learnt-rm}.

\textbf{Iterative (Online / Offline) Alignment}\label{sec:iterative}
So far we have only explored single-iteration training, i.e. we train the model until convergence and then evalute it. In this section, we study the impact of repeating this process multiple times. Specifically, let $\pi_{\theta_k}$ ($\pi_{\theta_0} = \pi_{SFT}$) be the policy after the $k^{th}$ iteration. In the $(k+1)^{th}$ iteration, we use $\pi_{\theta_k}$ as the reference policy and the initialization and continue to train the model. For offline alignment, we need to regenerate the training data based on $\pi_{\theta_k}$ and the Grund-Truth Judge. For online training, since the data is sampled along training, we only need to update the reference model at every iteration.

\begin{figure}[h]
    \centering
    \includegraphics[width=\linewidth]{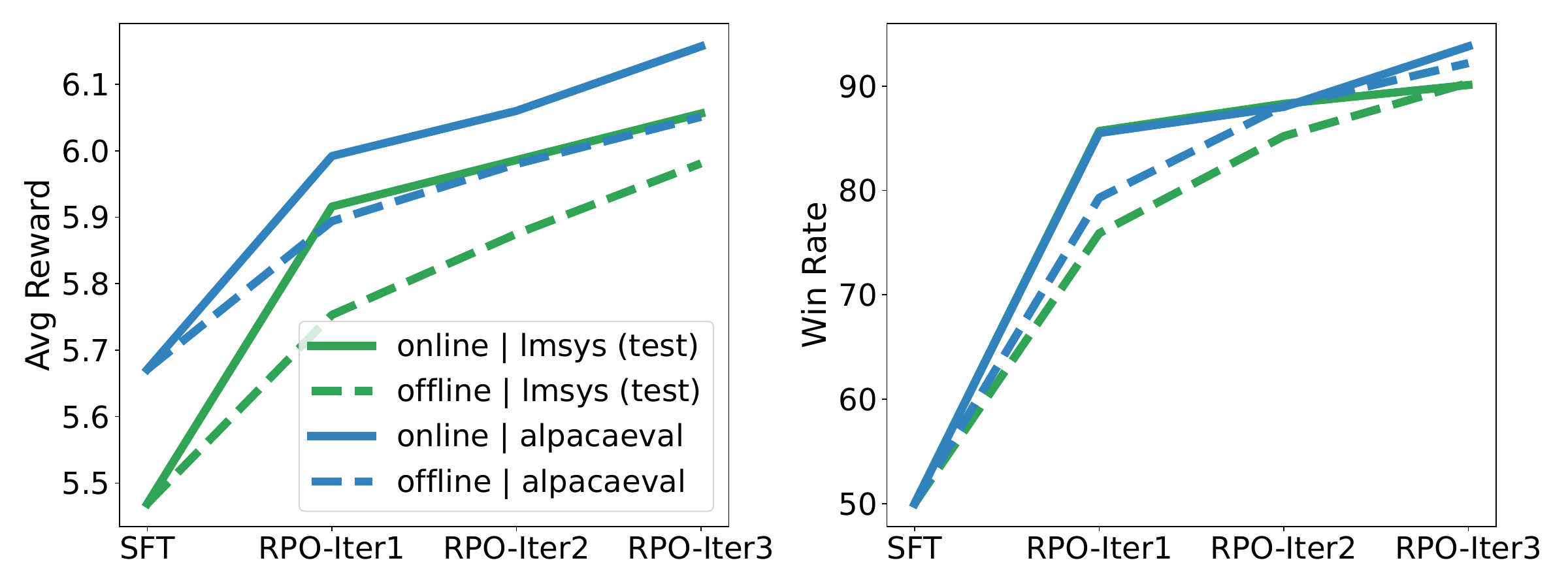}
    \caption{Performance improves consistently with more iterations.}
    \label{fig:iterative-dpo}
\end{figure}
 
 We demonstrate the effect of iterative online/offline RPO-bwd using the 70b model and show their performance in Figure \ref{fig:iterative-dpo} (more details in Table \ref{tab:iterative-methods}). As shown in the figure, as 
 the number of iterations increases, the model's performance consistently improves for both online and offline training. The strongest model (online, iter3) has an average OOD reward of 6.157 and win rate of 93.8\%, beating all rows in Table \ref{tab:offline-online}. This demonstrates the effectiveness of iterative alignment. We also observe that at each iteration, online training generally outperforms offline training, but the gap is shrinking as number of iterations increases. In Appendix~\ref{app:sec:offline-online-iterative}, we have detailed the pseudocode for online/offline RPO and iterative online/offline RPO for reference.

\textbf{The Importance of RM Quality.} So far we assumed access to the Ground-Truth Judge (the Nemotron-4-340B-RM) and applied online methods to maximize its rewards. In practice, e.g., when the Ground-Truth Judge are human annotators, we need to first learn a RM to minic human annotators' preferences and then optimize the learnt RM's preferences. To study this scenario, we assume access to the preference data only and repeat the online training experiments using a learnt RM instead of the Nemotron-4-340B-RM. Then we evaluate how does the Nemotron-4-340B-RM's predictive rewards improve. In Table~\ref{tab:offline-online}, we find that online methods (rpo-bwd, rpo-sqloo) with the Learnt RM can barely improve over the SFT model. In Figure~\ref{fig:training-with-learnt-rm}, we observe the Learnt RM's rewards keep increasing when the Ground-Truth RM's rewards drop, highlighting the \textit{reward hacking} phenomenon. The results show that a strong reward model is critical to online methods and naively training a RM on the preference dataset is likely not sufficient. We left the investigation of RM training methods to future works.

\begin{figure}[h]
    \centering
    \includegraphics[width=0.9\linewidth]{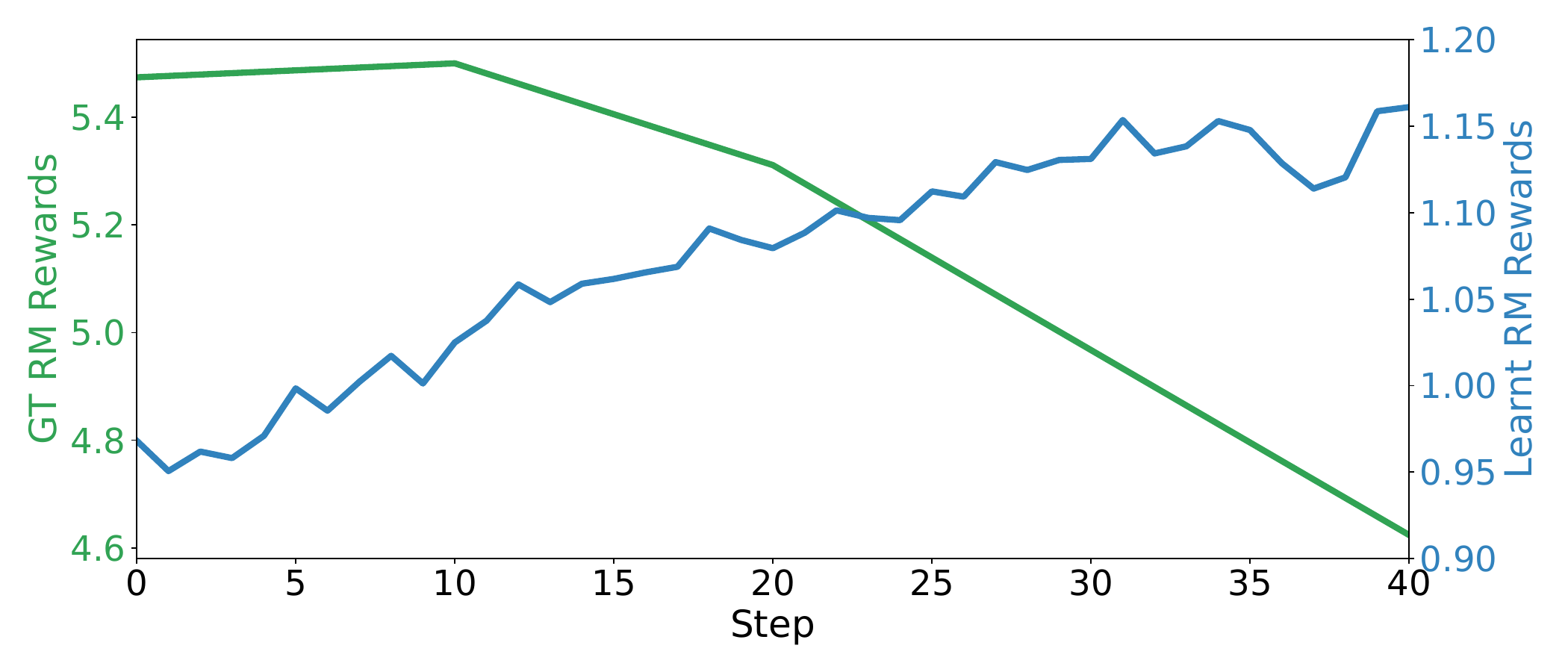}
    \caption{{\color{mygreen}GT RM}'s rewards vs {\color{myblue}Learnt RM}'s rewards along training.}
    \label{fig:training-with-learnt-rm}
\end{figure}

\section{Conclusion}

\paragraph{Summary of Learnings.} This paper has shown many learnings to understand model alignment algorithms better.
\begin{itemize}
    \item RPO unifies a variety of alignment algorithms like DPO, IPO, RLOO through different design choices. Using the framework, we can come up with competitive new methods like online RPO-bwd.
    \item The proposed experimental setup is a better testbed to ablate the effect of alignment algorithms, than popular benchmarks like MT-bench.
    \item Offline DPO, SimPO, and RPO-bwd perform similarly when well tuned, better than KTO and RPO-sqloo.
    \item The number of responses in offline preference optimization does not significantly impact model performances.
    \item Online RPO-bwd improves over online RPO-sqloo (i.e., RLOO) with more stable training, better KL regularization, and higher rewards. 
    \item Iterative alignment for both offline and online methods brings consistent performance improvement.
    \item When the preference dataset is annotated by an available reward model or verifier like Nemotron-4-340B-Instruct~\citet{adler2024nemotron} and DeepSeek-R1~\citep{guo2025deepseek}, online methods significantly outperform offline methods; when only the preference dataset is available (if it is human annotated or if the annotating RM is not available), the quality of online methods depends critically to the learnt RM. Offline methods can be competitive in this case.
\end{itemize}

\paragraph{Alignment Recipe Recommendations.} The best alignment strategy can vary in difference scenarios depending on the available compute resources, human labeling resources, and etc. We make the following two general recommendations based on the main sources of training data.
\begin{itemize}
    \item If a strong RM or a ground-truth verifier is available, we suggest running iterative online alignment to optimize the RM's preferences. This reflects NVIDIA's Llama3.1-Nemotron-70b-Instruct alignment recipe \citep{wang2024helpsteer2preference} and DeepSeek-R1's recipe~\citep{guo2025deepseek}. We recommend online \textit{RPO-bwd}.
    \item If enough human labeling resources are available, we suggest running iterative offline alignment while new preference datasets are built in each iteration. This reflects Meta's Llama3 alignment recipe \citep{dubey2024llama}. We recommend offline \textit{RPO-bwd}.
\end{itemize}

\paragraph{Limitations and Future Works.} This work opens up a new venue to study various factors in model alignment. It asks for more future works to gain deeper understanding of model alignment techniques.
\begin{itemize}
    \item \textbf{Token-Level RPO.} While RPO unifies many alignment methods, it fails to connect to PPO \citep{schulman2017proximal} in RLHF. The reason is that RPO applies on the sequence-level, lacking the ability of token-level credit assignment like PPO. We expect future works to extend the sequence-level RPO to token-level RPO.
    \item \textbf{Reward Model training.} Figure~\ref{fig:training-with-learnt-rm} showed that the Learnt RM's quality is critical to online alignment. We expect future works to understand the best recipe to train strong RMs \citep{gao2023scaling, rafailov2024scaling}.
    \item \textbf{Preference data generation.} In this paper we select random prompts, generate random responses using the current policy, and build preferences with the Ground-Truth Judge. The proposed experimental setup enables us to understand the best recipes for preference data generation including prompt selection, response generation (diversity, distillation, on-policy vs off-policy), and preference judging (RM vs LLM-as-Judge).
    \item \textbf{Extended Experimental Setups.} To understand alignment recipes in wider scenarios, we hope to see results in a wider variety of experimental setups regarding problem domains (e.g., math and coding), Ground-Truth Judges (RMs, Ground-Truth verifiers), and etc.
\end{itemize}

\clearpage

\section*{Impact Statement}

This paper advances our understanding of preference optimization algorithms in the context of model alignment. Its findings have significant implications, such as improving the development of AI systems that are more beneficial to society through enhanced algorithms. Additionally, by aligning models more closely with human values, the research contributes to improving AI safety. While the paper primarily focuses on algorithmic advancements, it also highlights the importance of adopting these algorithms with caution to ensure that strong AI models are trained in ways that mitigate the risk of misalignment.

\nocite{langley00}

\bibliography{bib/misc}
\bibliographystyle{icml2025}

\newpage
\appendix
\onecolumn

\section{Derivation of the distance functions for the multi-response scenario}\label{app:multi-response distance}

\paragraph{Squared Distance with Leave-One-Out (sqloo).} Computing the squared distance naively between the implicit rewards and explicit rewards cannot remove the log partition function. We consider the squared distance with Leave-One-Out. Specifically,
\begin{align}
    &\mathbb{D}^{sqloo}(a_{1:K}, b_{1:K}) = \frac{1}{2} \sum_{k=1}^K (\hat{a}_k - \hat{b}_k)^2, \notag  \\
    &\hat{a}_k = a_k - \frac{1}{K-1} \sum_{j \neq k} a_j; \hat{b}_k= b_k - \frac{1}{K-1} \sum_{j \neq k}b_j. \notag
\end{align}
Because of the Leave-One-Out subtraction, the log partition function $\log Z(x)$ cancels out.

\paragraph{Backward Categorical KL Divergence (bwd-kl).} We use the rewards as softmax logits to define a categorical distribution and compute the KL divergence between two categorical distributions.
\begin{align}
      & \mathbb{D}^{bwd}(a_{1:K}, b_{1:K})= \sum_{i=1}^K q^b_i \left(\log q^b_i - \log q^{a}_i\right),  \notag \\
      & q^b_i = \frac{\exp(b_i)}{\sum_{j=1}^K \exp(b_j)};  q^a_i = \frac{\exp(a_i)}{\sum_{j=1}^K \exp(a_j)}. \notag 
\end{align}
Because of the softmax operation, the log partition function $\log Z(x)$ cancels out as well.

\paragraph{Forward Categorical KL Divergence (fwd-kl).} Similarly, we can use the forward KL divergence as the metric.
\begin{align}\label{eq:d-fwd-kl}
      \mathbb{D}^{fwd}(a_{1:K}, b_{1:K})= \sum_{i=1}^K q^b_i \left(\log q^b_i - \log q^{a}_i\right).
\end{align}

\section{Gradient derivation of online RPO with the Backward KL Divergence (rpo-bwd) distance.} \label{app:gradient-rpo-bwd}
When using the backward KL divergence,
\begin{align}
  &  \nabla_{a_i} \mathbb{D}^{bwd}(a_{1:K}, b_{1:K}) 
    = \nabla_{a_i} \left[ \log\left(\sum_{j=1}^K \exp(a_i)\right) -   q^b_i a_i  \right] \notag \\
    & \quad \quad \quad \quad = \frac{\exp(a_i)}{\sum_{j=1}^K \exp(a_i)} -  q^b_i =  q^a_i - q^b_i.
\end{align}
Therefore, the score function scale is
\begin{align}
    S_k   &= q_k^{ r^{\pi_{\theta}}(x, y^{1:K})} - q_k^{\eta r^\star(x, y^{1:K})}.
\end{align}
In $S_k$, the first term is the softmax probabilities using $\beta \log \frac{\pi_{\theta}(y^k|x)}{\pi_{ref}(y^k|x)}, k=1,...,K$ as the logits; the second term is the softmax probabilities using $\eta r^\star(x, y^k), k=1,...,K$ as the logits. Therefore, using the backward KL divergence in online RPO, is equivalent to replacing the "scale" in the REINFORCE gradient estimator as the difference between the softmax probabilities of the ground-truth rewards and the predicted rewards.

\section{Connection of Bernoulli Backward KL Divergence to DPO}\label{app:rpo-eqs-dpo}
In this subsection we prove the equivalence of Reward-aware Preference Optimization (Bernoulli Backward KL divergence) to \citet{pandey2024brain}, whose special case is DPO \citep{rafailov2024direct}.
\begin{theorem}\label{thm:rpo-bkl-dpo} When using the Bernoulli distribution KL divergence in Reward-aware preference optimization and $\beta=1, \eta=1$, the objective is equivalent to Equation~22 in \citet{pandey2024brain}.
\end{theorem}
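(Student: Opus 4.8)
The plan is to expand both objectives explicitly in terms of the implicit reward difference $\delta_{r_\pi}(x,y^1,y^2)$ and the target reward difference $\delta_{r^\star}$ and check they coincide termwise. First I would instantiate the RPO objective in Equation~\ref{eq:rpo} with the Bernoulli backward KL metric $\mathbb{D}^{bwd}[a\|b] = \mathrm{KL}[p_b\|p_a]$, taking $\beta=1$ and $\eta=1$, so that $a = r_{\pi_\theta}(x,y^1)-r_{\pi_\theta}(x,y^2) = \delta_{r_\pi}(x,y^1,y^2)$ (the log partition function $\log Z(x)$ cancels in the difference) and $b = r^\star(x,y^1)-r^\star(x,y^2) = \delta_{r^\star}$. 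Writing out $\mathrm{KL}[p_b\|p_a] = \sigma(b)\log\frac{\sigma(b)}{\sigma(a)} + (1-\sigma(b))\log\frac{1-\sigma(b)}{1-\sigma(a)}$ and dropping the terms depending only on $b$ (constants in $\theta$), the objective reduces to $-\sigma(b)\log\sigma(a) - (1-\sigma(b))\log(1-\sigma(a))$, i.e. a soft cross-entropy with soft label $\sigma(\delta_{r^\star})$ against prediction $\sigma(\delta_{r_\pi})$. This is exactly the cDPO/conservative-DPO form with $c=\sigma(\delta_{r^\star})$, and sending $\delta_{r^\star}\to\infty$ recovers plain DPO, consistent with Table~\ref{tab:rpo-recover-variants}.

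Next I would write down Equation~22 of \citet{pandey2024brain} in the notation of this paper. The BRAIn objective, in the paired-response special case, is a KL (or cross-entropy) between a target distribution over the pair built from the reference-model-reweighted exponentiated target rewards and the corresponding distribution built from the implicit rewards $\log\frac{\pi(y|x)}{\pi_{ref}(y|x)}$. The key algebraic observation is that, for two responses, the pairwise softmax $\frac{\exp(u_1)}{\exp(u_1)+\exp(u_2)}$ equals $\sigma(u_1-u_2)$; so BRAIn's target weight on $y^1$ becomes $\sigma(r^\star(x,y^1)-r^\star(x,y^2)) = \sigma(\delta_{r^\star})$ and its model weight on $y^1$ becomes $\sigma(\delta_{r_\pi})$. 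Substituting these into the BRAIn cross-entropy yields precisely $-\sigma(\delta_{r^\star})\log\sigma(\delta_{r_\pi}) - (1-\sigma(\delta_{r^\star}))\log(1-\sigma(\delta_{r_\pi}))$ up to the same $\theta$-independent additive constant, matching the RPO expression derived above. The two objectives therefore have identical gradients in $\theta$ and identical minimizers, establishing the equivalence.

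The bookkeeping that needs care is lining up conventions: which response BRAIn treats as the "numerator," whether their normalizing constant absorbs $\log Z(x)$ or $\pi_{ref}$ exactly the way RPO's implicit reward $r_{\pi_\theta}(x,y) = \log\frac{\pi(y|x)}{\pi_{ref}(y|x)} + \log Z(x)$ does, and confirming that with general $K$ responses BRAIn's categorical softmax reduces, at $K=2$, to the Bernoulli form used here (this is where the $\frac{\exp(u_1)}{\exp(u_1)+\exp(u_2)}=\sigma(u_1-u_2)$ identity does the work). I expect this matching of normalization conventions — in particular verifying that BRAIn's self-normalized importance weights collapse to $\sigma(\delta_{r^\star})$ with no leftover $x$-dependent factor and that the $\beta=\eta=1$ choice is exactly what makes the temperature/scaling agree — to be the main obstacle; once the dictionaries are aligned, the remaining computation is a one-line cross-entropy comparison.
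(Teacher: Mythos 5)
Your proposal is correct and follows essentially the same route as the paper's proof: reduce BRAIn's pairwise softmax weights to $\sigma(\delta_{r^\star})$ and $\sigma(\delta_{r_\pi})$ via $\frac{\exp(u_1)}{\exp(u_1)+\exp(u_2)}=\sigma(u_1-u_2)$ and identify the resulting expression with the Bernoulli backward KL of the RPO objective at $\beta=\eta=1$. The only cosmetic difference is that you argue equivalence up to a $\theta$-independent constant (cross-entropy form), whereas the paper observes the BRAIn summand is \emph{exactly} the Bernoulli KL $\mathrm{KL}_{ber}\left[\sigma(r^\star(x,y_1)-r^\star(x,y_2))\,\middle\|\,\sigma\left(\log\tfrac{\pi_\theta(y_1|x)}{\pi_{ref}(y_1|x)}-\log\tfrac{\pi_\theta(y_2|x)}{\pi_{ref}(y_2|x)}\right)\right]$, so no constant-dropping is needed.
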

\begin{proof} To demonstrate the equivalence, we first recall the definition of the Bernoulli reverse KL divergence:
\begin{align}
  \mathbb{D}\left[a\|b\right] := \mathrm{KL}_{ber}\left[p_b \| p_a\right] = \sigma(b) \log \frac{\sigma(b)}{\sigma(a)} + (1-\sigma(b)) \log \frac{1-\sigma(b)}{1-\sigma(a)} .
\end{align}
Now we explain the Equation~22 in \citet{pandey2024brain}, which is
\begin{align}
    & \expect_{y_1, y_2 \sim p_{ref}(y|x)} \left[ \hat{\alpha}_{y_1} \log \frac{\hat{\alpha}_{y_1} }{\hat{\beta}_{y_1} } + \hat{\alpha}_{y_2} \log \frac{\hat{\alpha}_{y_2} }{\hat{\beta}_{y_2} }  \right], \\
     \hat{\alpha}_{y_1} & = \frac{\exp(r^\star(x, y_1))}{\exp(r^\star(x, y_1)) + \exp(r^\star(x, y_2))} = \frac{1}{1 + \exp( - \left(r^\star(x, y_1) - r^\star(x, y_2))\right) } \notag \\
    &= \sigma\left(r^\star(x, y_1) - r^\star(x, y_2)\right), \notag \\
    \hat{\alpha}_{y_2}&  = \frac{\exp(r^\star(x, y_2))}{\exp(r^\star(x, y_1)) + \exp(r^\star(x, y_2))} = 1 - \hat{\alpha}_{y_1}, \notag  \\
     \hat{\beta}_{y_1}  
     &= \frac{\beta_{y_1}}{\beta_{y_1} + \beta_{y_2}} = \frac{\exp(\log \frac{\pi_{\theta} (y_1 | x) }{\pi_{ref} (y_1 | x)})}{\exp(\log \frac{\pi_{\theta} (y_1 | x) }{\pi_{ref} (y_1 | x)}) + \exp(\log \frac{\pi_{\theta} (y_2 | x) }{\pi_{ref} (y_2 | x)})}  \notag \\
     &= \frac{1}{1 + \exp(-\left(\log \frac{\pi_{\theta} (y_1 | x) }{\pi_{ref} (y_1 | x)} - \log \frac{\pi_{\theta} (y_2 | x) }{\pi_{ref} (y_2 | x)} \right))} \notag \\
     &= \sigma\left(\log \frac{\pi_{\theta} (y_1 | x) }{\pi_{ref} (y_1 | x)} - \log \frac{\pi_{\theta} (y_2 | x) }{\pi_{ref} (y_2 | x)}\right), \notag \\
    \hat{\beta}_{y_2} & = \frac{\beta_{y_2}}{\beta_{y_1} + \beta_{y_2}} = 1 - \hat{\beta}_{y_1} .\notag
\end{align}
Therefore, the objective inside the expectation is a KL divergence between two Bernoulli distributions: one whose probability of 1 is $ \hat{\alpha}_{y_1}$; one whose probability of 1 is $\hat{\beta}_{y_1}$:
\begin{align}
   & \hat{\alpha}_{y_1} \log \frac{\hat{\alpha}_{y_1} }{\hat{\beta}_{y_1} } + \hat{\alpha}_{y_2} \log \frac{\hat{\alpha}_{y_2} }{\hat{\beta}_{y_2} }  = \hat{\alpha}_{y_1} \log \frac{\hat{\alpha}_{y_1} }{\hat{\beta}_{y_1} } + \left(1 - \hat{\alpha}_{y_1}\right) \log \frac{1 - \hat{\alpha}_{y_1} }{1 - \hat{\beta}_{y_1}} \notag \\
    &= \mathrm{KL}_{ber}\left[\sigma(r^\star(x, y_1) - r^\star(x, y_2)) \| \sigma( \log \frac{\pi_{\theta} (y_1 | x) }{\pi_{ref} (y_1 | x)} - \log \frac{\pi_{\theta} (y_2 | x) }{\pi_{ref} (y_2 | x)} ) \right].
\end{align}
\end{proof}

Specifically, when the ground-truth reward margin $r^\star(x, y_1) - r^\star(x, y_2) = \infty$, then $\hat{\alpha}_{y_1} = 1$. Then the RPO objective becomes 
\begin{align}
    - \log \hat{\beta}_{y_1} = - \log \sigma( \log \frac{\pi_{\theta} (y_1 | x) }{\pi_{ref} (y_1 | x)} - \log \frac{\pi_{\theta} (y_2 | x) }{\pi_{ref} (y_2 | x)} ) .
\end{align}
This recovers the DPO's objective.

\section{Offline, Online, and Iterative Algorithms.} \label{app:sec:offline-online-iterative}
We present four algorithm variants of the RPO approach: Offline RPO, Online RPO, Iterative Offline RPO, and Iterative Online RPO. 
\begin{itemize}
    \item \textbf{Offline RPO.} The offline RPO algorithm optimizes the RPO objective with an offline preference dataset whose responses and rewards are pre-computed beforehand.
    \item \textbf{Online RPO.} In online RPO, responses are generated from the online policy $\pi_{\theta}$ and rewards are computed online with $r^\star$. As $\pi_{\theta}$ improves, the quality of its responses improves, which can improve $\pi_{\theta}$ further. Compared to offline RPO, the online RPO is more like RLHF, except that the policy is not trained with reinforcement learning.
    \item \textbf{Iterative Offline RPO.} While online RPO can improve its response quality continuously, it incurs a large computational cost since the response generation online is slow. In addition, the fully online nature of the online RPO makes it vulernerable to reward hacking (see Figure~\ref{fig:training-with-learnt-rm} for detailed discussions). A tradeoff between offline RPO and online RPO is the iterative offline RPO, where we run offline RPO iteratively. In each iteration, the responses are re-sampled from the current policy and the reference policy is updated with the current policy. Then the policy is fully optimized with offline RPO with the curated dataset in the current iteration. Compared to online RPO, since the number of iterations is usually small ($<10$), it is much less likely to have reward hacking. The computational cost is also significantly lower since all the responses are generated offline, which can be done in massive parallelism.
    \item \textbf{Iterative Online RPO.} We can iteratively conduct online alignment as well. In each iteration, we update the reference policy with the latest policy and run online RPO. Compared to the single-iteration online RPO, it enables the model to keep improving itself by evolving the reference policy. On the other hand, it differs with using a smaller $\beta$ in single-iteration online RPO. The difference is that the reference policy used in each iteration are more quality-controlled, which reduces the possibility of reward hacking.
\end{itemize}

\begin{algorithm}[h]
    \caption{Offline RPO.}
    \label{alg:offline-rpo}
    \begin{algorithmic}[1]
        \Require{\textit{Prompt-Response-Reward Dataset}: $\mathcal{D}=\{(x_i, y_i^1, y_i^2, r^{\star}(x, y_i^1), r^{\star}(x, y_i^2))\}_{i=1}^n$.}
        \Require{\textit{Reference Policy}: $\pi_{ref}$; \textit{Distance Metric}: $\mathcal{D}$.}
        \Require{\textit{Hyperparameter}: $\beta$; \textit{Steps}: $T$; \textit{Batchsize}: $B$.}
        \State Initialize $\pi_{\theta} := \pi_{ref}$.
        \For{$t = 1 \; \mathrm{to} \; T$}
            \State Sample batch $\mathcal{B} \subset \mathcal{D}$.
            \State Compute the RPO Objective $\mathcal{L}_{rpo}^{\mathbb{D}}( \pi_{\theta} |\mathcal{B}, r^\star, \pi_{ref}, \beta)$.
            \State Update the policy network $\theta$ with the chosen optimizer.
        \EndFor
        \State \Return $\pi_{\theta}$.
    \end{algorithmic}
\end{algorithm}

\begin{algorithm}[h]
    \caption{Online RPO.}
    \label{alg:online-rpo}
    \begin{algorithmic}[1]
        \Require{\textit{Prompt Dataset}: $\mathcal{D}_x=\{x_i\}_{i=1}^n$.}
        \Require{\textit{Reference Policy}: $\pi_{ref}$; \textit{Distance Metric}: $\mathcal{D}$; \textit{Reward Model}: $r^\star$.}
        \Require{\textit{Hyperparameter}: $\beta$; \textit{Steps}: $T$; \textit{Batchsize}: $B$.}
        \State Initialize $\pi_{\theta} := \pi_{ref}$.
        \For{$t = 1 \; \mathrm{to} \; T$}
            \State Sample the prompts batch $\mathcal{B}_x \subset \mathcal{D}_x$.
            \State Sample two responses from $\pi_{\theta}$ for the prompts in $\mathcal{B}$: $y_i^1, y_i^2 \sim \pi_{\theta}(x_i)$, for $x_i \in \mathcal{B}_x $.
            \State Compute the rewards for the responses: $r^{\star}(x, y_i^1), r^{\star}(x, y_i^2)$.
            \State Construct the batch $\mathcal{B} = \{(x_i, y_i^1, y_i^2, r^{\star}(x, y_i^1), r^{\star}(x, y_i^2)) | x_i \in \mathcal{B}_x\}$.
            \State Compute the RPO Objective $\mathcal{L}_{rpo}^{\mathbb{D}}( \pi_{\theta} |\mathcal{B}, r^\star, \pi_{ref}, \beta)$.
            \State Update the policy network $\theta$ with the chosen optimizer.
        \EndFor
        \State \Return $\pi_{\theta}$.
    \end{algorithmic}
\end{algorithm}

\begin{algorithm}[h]
    \caption{Iterative Offline RPO.}
    \label{alg:iterative-offline-rpo}
    \begin{algorithmic}[1]
        \Require{\textit{Prompt Dataset}: $\mathcal{D}_x=\{x_i\}_{i=1}^n$.}
        \Require{\textit{Reference Policy}: $\pi_{ref}$; \textit{Distance Metric}: $\mathcal{D}$.}
        \Require{\textit{Hyperparameter}: $\beta$; \textit{Iters}: $I$.}
        \State Initialize $\pi_{\theta}^0 := \pi_{ref}$.
        \For{$i = 1 \; \mathrm{to} \; I$} 
            \State Sample prompts $\mathcal{B}_x \subset \mathcal{D}_x$.
            \State Sample two responses from $\pi_{\theta}^{i-1}$ for the prompts in $\mathcal{B}$: $y_i^1, y_i^2 \sim \pi_{\theta}^{i-1}(x_i)$, for $x_i \in \mathcal{B}_x $.
            \State Compute the rewards for the responses: $r^{\star}(x, y_i^1), r^{\star}(x, y_i^2)$.
            \State Construct the training data $\mathcal{D}^i = \{(x_i, y_i^1, y_i^2, r^{\star}(x, y_i^1), r^{\star}(x, y_i^2)) | x_i \in \mathcal{B}_x\}$.
            \State Optimize the policy with the offline RPO using $\mathcal{D}^i$, which returns $\pi_{\theta}^i$.
        \EndFor
        \State \Return $\pi_{\theta}^I$.
    \end{algorithmic}
\end{algorithm}

\begin{algorithm}[h]
    \caption{Iterative Online RPO.}
    \label{alg:iterative-offline-rpo}
    \begin{algorithmic}[1]
        \Require{\textit{Prompt Dataset}: $\mathcal{D}_x=\{x_i\}_{i=1}^n$.}
        \Require{\textit{Reference Policy}: $\pi_{ref}$; \textit{Distance Metric}: $\mathcal{D}$.}
        \Require{\textit{Hyperparameter}: $\beta$; \textit{Iters}: $I$.}
        \State Initialize $\pi_{\theta}^0 := \pi_{ref}$.
        \For{$i = 1 \; \mathrm{to} \; I$} 
            \State Run Online RPO (Algorithm~\ref{alg:online-rpo}) with $\pi_{\theta}^{i-1}$ as the reference policy and initialization, resulting in $\pi_{\theta}^{i}$.
        \EndFor
        \State \Return $\pi_{\theta}^I$.
    \end{algorithmic}
\end{algorithm}

\section{Additional Experiments and Results}\label{app:sec:add-exps}

\subsection{The impact of prompt distributions (Section~\ref{subsec:prompt-impacts})}

We show the average rewards and win-rates over \textit{lmsys (valid)} prompts, \textit{lmsys (test)} prompts, and \textit{alpacaeval} prompts in Table~\ref{tab:in-out-dist-prompts}.
\begin{table*}[h]
\centering
\caption{The average reward and win-rate when training on \textit{lmsys} and \textit{synthetic} prompts, respectively. We compute the metrics over \textit{lmsys (valid)} prompts, \textit{lmsys (test)} prompts, and \textit{alpacaeval} prompts.\label{tab:in-out-dist-prompts}}
\begin{tabular}{ccccccc}
\toprule
                 & \multicolumn{3}{c}{AvgReward}                              & \multicolumn{3}{c}{Win-Rate ($\%$)}                                  \\
                 Prompts & lmsys (valid)      & lmsys (test)      & alpacaeval        & lmsys (valid)      & lmsys (test)        & alpacaeval         \\
\midrule
lmsys            &  $\mathbf{5.498 \pm 0.005}$ & $\mathbf{5.503 \pm 0.016}$ & $5.600 \pm 0.006$ & $\mathbf{71.9 \pm 2.0}$ & $\mathbf{70.5 \pm 1.2}$  & $71.6 \pm 2.0$ \\
synthetic        & $ 5.465 \pm 0.006$ & $5.487 \pm 0.008$ & $\mathbf{5.635 \pm 0.015}$ & $69.6 \pm 0.8$ & $67.3 \pm 1.0$  & $\mathbf{73.3 \pm 1.1}$ \\
\bottomrule
\end{tabular}
\end{table*}



\subsection{Iterative (Offline / Online) Alignment (Section~\ref{sec:iterative})}

We show the average rewards and win-rates over \textit{lmsys (valid)} prompts, \textit{lmsys (test)} prompts, and \textit{alpacaeval} prompts in Table~\ref{tab:iterative-methods}. In additional to iterative offline RPO and iterative online RPO in the 70b setting, we also include iterative offline DPO in the 8b setting, which shows similar learnings.
\begin{table*}[h]
\centering
\caption{The average reward and win-rate in iterative DPO, iterative RPO-bwd (offline), and iterative RPO-bwd (online). We compute the metrics over \textit{lmsys (valid)} prompt, \textit{lmsys (test)} prompt, and \textit{alpacaeval} prompts. \label{tab:iterative-methods}}
\begin{tabular}{cccccccc}
\toprule
                &  & \multicolumn{3}{c}{AvgReward}                              & \multicolumn{3}{c}{Win-Rate ($\%$)}                                  \\
             Base &     Prompts & lmsys (valid)      & lmsys (test)      & alpacaeval        & lmsys (valid)      & lmsys (test)        & alpacaeval         \\
\midrule
\multirow{4}{*}{8b} & SFT            & 5.245  & 5.284 & 5.383 & 50 & 50 & 50 \\
& Offline DPO-Iter-1      & 5.482  & 5.512 &  5.552 & 69.4 & 68.0 & 68.6 \\
& Offline DPO-Iter-2       & 5.569  & 5.588 & 5.743 & 77.4 & 76.1 & 81.2 \\
& Offline DPO-Iter-3       & 5.725  & 5.746 & 5.792  & 85.4 & 83.0 & 84.5 \\
\midrule
\multirow{4}{*}{70b} & SFT            & 5.474  & 5.469 & 5.671 & 50 & 50 & 50 \\
& Offline RPO-bwd-Iter-1      & 5.737  & 5.753 &  5.894 & 77.4 & 75.9 & 79.3 \\
& Offline RPO-bwd-Iter-2       & 5.878  & 5.875 & 5.98 & 88.5 & 85.2 & 88.1 \\
& Offline RPO-bwd-Iter-3      & 5.970  &  5.981 & 6.051 & 92.9 & 90.3 & 92.2 \\
\midrule
\multirow{4}{*}{70b} & SFT            & 5.474  & 5.469 & 5.671 & 50 & 50 & 50 \\
& Online RPO-bwd-Iter-1      & 5.883  & 5.916 &  5.992 & 82.1 & 85.7 & 85.5 \\
& Online RPO-bwd-Iter-2       & 5.985 & 5.986  & 6.060  & 88.5  & 88.3 & 88.0  \\
& Online RPO-bwd-Iter-3      & 6.061 & 6.056  & 6.157  & 91.2  & 90.1  & 93.8 \\
\bottomrule
\end{tabular}
\end{table*}

\section{Experimental Details} \label{app:sec:details}

\paragraph{Evaluations.} We use the average rewards over \textit{lmsys (valid)} prompts to select the best checkpoint and the best hyper-parameter. We then repeat the model training with the best hyper-parameter multiple times with randomly shuffled data (only in the 8b setting for the sake of computational costs). For each random training, we evaluate the validation and test metrics over the best checkpoint. Then we report the average metric and 95\% confidence interval.

\paragraph{Offline preference optimization methods.} For all offline RPO training jobs (8b or 70b, K=2 or K=4), we use the Adam optimizer with a constant learning rate 5e-7. We set the batch size at 256 and train the model for one epoch. We save checkpoints every 50 iterations, best of which is selected according to the validation average reward. Similarly for KTO, we use Adam lr of 5e-7 for 8b and 7e-7 for 70b both with batch size of 256. KL regularization used was $1e\text{-}2$ for 8b and $6e\text{-}2$ for 70b. Checkpoint selection was done similar to offline RPO. For RPO-bwd, We tune the KL regularization coefficient $\beta \in [1e\text{-}3, 1e\text{-}2]$ and the explicit RM coefficient $\eta \in [1, 100]$. For RPO-sqloo, we tune the KL regularization coefficient $\beta \in [3e\text{-}3, 1e\text{-}1]$ and fix $\eta=1$ since increasing $\eta$ is equivalent to poportionally decreasing $\beta$ in RPO-sqloo.

\paragraph{Online preference optimization methods.} For online RPO training jobs (70b, K=4), we use the Adam optimization with a constant learning rate 5e-7. We set the global batch size at 64 and train the model for as far as 250 iterations (13\% of one epoch) due to computational limitations. We save checkpoints every 10 iterations, best of which is selected according to the validation average reward. For RPO-bwd, we tune the KL regularization coefficient $\beta \in [3e\text{-}3, 1e\text{-}2]$ and the explicit RM coefficient $\eta \in [10, 30]$. For RPO-sqloo with 70B models (i.e., RLOO), we tune the KL regularization coefficient $\beta \in [1e\text{-}3, 3e\text{-}2]$ and fix $\eta=1$ since increasing $\eta$ is equivalent to poportionally decreasing $\beta$ in RPO-sqloo. While multiple RPO-sqloo training runs crashed in the middle of training, all RPO-bwd training runs stably increased. For 8B models we tune the learning rate in $[2e\text{-}7, 8e\text{-}7]$ and tune the KL regularization coefficient $\beta \in [1e\text{-}4, 1e\text{-}2]$.

\paragraph{Training the Learnt RM.} We use the preference dataset to train a reward model for 8b and 70b, respectively. The reward model is initialized with the SFT checkpoint and trained with a constant learning rate for one epoch. We used lr=$\text{3e-6}$ and batch size 512 for training the 8B RM; we use lr $\text{1e-6}$ and batch size $256$ for the 70B RM. For the 8B RM, we used the Bradley-Terry model~\citep{ouyang2022training} for the training objective; for the 70b RM, we used the regression model~\citep{wang2024helpsteer2}. 

\paragraph{Iterative alignment.} The preference optimization dataset used in Iter-1 contains 120k unique prompts. To enable iterative alignment, we additionally collected 150k lmsys prompts, which are disjointed from these preference optimization prompts and the SFT prompts. We selected 100k randomly prompts out of the 150k new prompts and built the preference datasets in Iter-2. We trained the models (offline DPO, offline RPO) for one epoch. Then we mixed the remaining 50k prompts and another random 50k prompts from the Iter-1 preference datasets to build the preference dataset in Iter-3. We then trained models (offline DPO, offline RPO) for one epoch.

\end{document}